\def\m{{\bf m}}
\def\t{{\bf t}}
\def\w{{\bf w}}
\def\X{{\bf X}}
\def\x{{\bf x}}
\def\z{{\bf z}}
\def\0{{\bf 0}}
\def\1{{\bf 1}}
\def\<{\, \langle \,}
\def\>{\, \rangle \,}
\def\inv{^{-1}}
\def\ts{^\top}
\def\bpi{\bm{\pi}}
\def\bepsilon{\bm{\epsilon}}
\def\btheta{\bm{\theta}}
\def\bPhi{\bm{\Phi}}
\def\DN{\mathcal{N}}
\def\GIG{\mathrm{GIG}}
\def\Exp{\mathrm{Exponential}}
\def\Dis{\mathrm{Discrete}}
\newcommand{\argmin}{\operatornamewithlimits{argmin}}
\newcommand{\iid}{i.i.d.\ }
\def\ctilde{\kern -.04em\lower .7ex\hbox{\~{}}\kern .04em}
\newtheorem{lemma}{Lemma}
\begin{document}
\title{Efficient hierarchical clustering for continuous data
\footnotetext[0]{Ricardo Henao is Postdoctoral Associate and Joseph E. Lucas is Assistant Research Professor at the Institute for Genome Sciences and Policy (IGSP), Duke University, Durham, NC 27710. E-mail: \url{r.henao@duke.edu} and \url{joe@stat.duke.edu}. This work was supported by funding from the Defense Advanced Research Projects Agency (DARPA), number lN66001-07-C-0092 (G.S.G.)}
}
\author{Ricardo Henao and Joseph E. Lucas}
\date{\small \today}
\maketitle
%
%
\begin{abstract}
	We present an new sequential Monte Carlo sampler for coalescent based Bayesian hierarchical clustering.  Our model is appropriate for modeling non-\iid data and offers a substantial reduction of computational cost when compared to the original sampler without resorting to approximations. We also propose a quadratic complexity approximation that in practice shows almost no loss in performance compared to its counterpart. We show that as a byproduct of our formulation, we obtain a greedy algorithm that exhibits performance improvement over other greedy algorithms, particularly in small data sets. In order to exploit the correlation structure of the data, we describe how to incorporate Gaussian process priors in the model as a flexible way to model non-\iid data. Results on artificial and real data show significant improvements over closely related approaches.
	
	{\bf Keywords:} coalescent, gaussian process, sequential Monte Carlo, greedy algorithm.
\end{abstract}
\section{Introduction}
Learning hierarchical structures from observed data is a common practice in many knowledge domains. Examples include phylogenies and signaling pathways in biology, language models in linguistics, etc.  Agglomerative clustering is still the most popular approach to hierarchical clustering due to its efficiency, ease of implementation and a wide range of possible distance metrics.  However, because it is algorithmic in nature, there is no principled way to that agglomerative clustering can be used as a building block in more complex models. Bayesian priors for structure learning on the other hand, are perfectly suited to be employed in larger models. As an example, several authors have proposed using hierarchical structure priors to model correlation in factor models \citep{rai08,henao12a,zhang11a}.

There are many approaches to hierarchical structure learning already proposed in the literature, see for instance \citet{neal03,heller05,teh08,adams10}. Since we are particularly interested in continuous non \iid data and model based hierarchical clustering we will focus our work on the Bayesian agglomerative clustering model proposed by \citet{teh08}. Although the authors introduce priors both for continuous and discrete data, no attention is paid to the non \iid case, mainly because their work is focused in proposing different inference alternatives.

The approach proposed by \citet{teh08} is a Bayesian hierarchical clustering model with coalescent priors. Kingman's coalescent is a standard model from population genetics perfectly suited for hierarchical clustering since it defines a prior over binary trees \citep{kingman82,kingman82a}. This work advances Bayesian hierarchical clustering in two ways: (i) we extend the original model to handle non \iid data and (ii) we propose an efficient sequential Monte Carlo inference procedure for the model which scales quadratically rather than cubically as in the original sampler by \citet{teh08}. As a byproduct of our approach we propose as well a small correction to the greedy algorithm of \citet{teh08} that shows gains particularly in small data sets.

There is a separate approach by \citet{gorur08} that also improves the cubic computational cost of Bayesian hierarchical clustering. They introduce an efficient sampler with quadratic cost that although proposed for discrete data can be easily extended to continuous data, however as we will show, our approach is still substantially faster.

The remainder of the manuscript is organized as follows, the data model and the use of coalescents as priors for hierarchical clustering are reviewed in Section~\ref{sc:coalescent}. Our approach to inference and relationships to previous approaches are described in Section~\ref{sc:inference}. Section~\ref{sc:results} contains numerical results on both artificial and real data. Section~\ref{sc:discussion} concludes with a discussion and perspectives for future research.
\section{Coalescents for hierarchical clustering} \label{sc:coalescent}
A model for hierarchical clustering consists of learning about a nested set of partitions of $n$ observations in $d$ dimensions, $\X$. Assuming that each partition differs only by two elements, the set of partitions defines a binary tree with $n$ leaves and $n-1$ branching points. Defining $\t=[t_1 \ \ldots \ t_{n-1}]$ and $\bpi=\{\pi_1,\ldots,\pi_{n-1}\}$ as the vector of branching times and the set of partitions, respectively, we can write a Bayesian model for hierarchical clustering as
\begin{align} \label{eq:model}
	\begin{aligned}
	\x_i|\t,\bpi \ \sim & \ p(\x_i|\t,\bpi) \,, \\
	\t,\bpi \ \sim & \ {\rm Coalescent}(n) \,,
	\end{aligned}
\end{align}
where $\x_i$ is the $i$-th row of $\X$, $p(\x_i|\t,\bpi)$ is its likelihood and the pair $\{\t,\bpi\}$ is provided with a prior distribution over binary tree structures known as the coalescent.

\subsection{Kingman's coalescent}
The $n$-coalescent is a continuous-time Markov chain originally introduced to describe the common genealogy of a sample of $n$ individuals backwards in time \citep{kingman82,kingman82a}. It defines a prior over binary trees with $n$ leaves, one for each individual.  The coalescent assumes a uniform prior over tree structures, $\bpi$, and exponential priors on the set of $n-1$ merging times, $\t$.

It can be thought of as a generative process on partitions of $\{1,\ldots,n\}$ as follows
\begin{itemize}
	\item Set $k=1$, $t_0=0$, $\pi_0=\{\{1\},\ldots,\{n\}\}$.
	\item While $k<n$
	\begin{itemize}
		\item Draw $\Delta_k \sim \Exp((n-k+1)(n-k)/2)$ (rate parameter).
		\item Set $t_k=t_{k-1} - \Delta_k$.
		\item Merge uniformly two sets of $\pi_{k-1}$ into $\pi_{k}$.
		\item Set $k=k+1$.
	\end{itemize}
\end{itemize}
Because there are $(n-k+1)(n-k)/2$ possible merges at stage $K$, any particular pair in $\pi_i$ merges with prior rate 1 for any $i$. We can compute the prior probability of a particular configuration of the pair $\{\t,\bpi\}$ as
\begin{align} \label{eq:prior}
	p(\t,\bpi) = \prod_{k=1}^{n-1} \exp\left(-\tfrac{(n-k+1)(n-k)}{2}\Delta_k\right) \,,
\end{align}
this is, the product of merging and coalescing time probabilities. Some properties of the $n$-coalescent include: (i) the marginal distribution of $\bpi$ is uniform and independent of $\t$, (ii) it is exchangeable in the set of partitions $\pi_i$ for every $i$ and (iii) the expected value of $t_{n-1}$ (last coalescing time) is $\mathbb{ E}[t_{n-1}]=2(1-n\inv)$.
\subsubsection{Distribution of the latent nodes}
Let $z_k^*\in \pi_i$ be a node in a binary tree with associated $d$-dimensional vector $\z_k$, and let $z^*_{c_1}$ and $z^*_{c_2}$ be its children.  We designate the leaves of the tree with $x_i^*$.  If we define $p(\z_k|\z_c,\t,\cdot)$ to be the transition density between a child node, $\z_c$, and its parent, $\z_k$, then we can recursively define $q(\z_k|\bpi,\t,\cdot)$ to be a (possibly unnormalized) distribution of $\z_k$ as follows:
\begin{align*}
q(\x_i|\bpi,\t,\cdot) = & \ \delta_{\x_i} \,, \\
q(\z_k|\bpi,\t,\cdot) = & \ \prod_{c\in C}\int p(\z_k|\z_c,\t,\cdot)q(\z_c|\bpi,\t,\cdot)d\z_c \,, \\
= & \ Z_k(\X,\bpi, \t,\cdot) q'(\z_k|\bpi,\t,\cdot) \,,
\end{align*}
where $C=\{c_1,c_2\}$ contains the two sets in $\pi_{k-1}$ that merge in $\pi_k$ and $q'(\z_k|\bpi,\t,\cdot)$ is a density (integrating to 1) and $Z_k$ is the appropriate scaling factor.

Recently, \citet{teh08} showed that by using an agglomerative approach for constructing $\{\t,\bpi\}$, the likelihood for the model in equation~\eqref{eq:model} can be recursively written as
\begin{align} \label{eq:lik}
	p(\X|\t,\bpi,\cdot) = \prod_{k=1}^{n-1} Z_{k}(\X|\bpi,\t,\cdot) \,.
\end{align}
We note that, because of the tree structure, $\z_k$ is independent of $\X$ conditional on the distributions of its two child nodes.  This implies that $Z_k(\X|\bpi,\t,\cdot) = Z_k(\X|\pi_k,\t_{1:k},\cdot)$.  Our formulation is equivalent to using \emph{message passing} to marginalize recursively from the leaves to the root of the tree \citep{pearl88}. The message is $q(\z_k|\cdot)$ for node $\z_k$ and it summarizes the entire subtree below node $\z_k$. 

Figure~\ref{fg:stree} illustrate the process for a segment of a tree. The size of partitions $\pi_k$ shrink as $k$ increases, so from the illustration $\pi_1=\{\{1\},\{2\},\ldots,\{n\}\}$, $\pi_2=\{\{1,2\},\ldots,\{n\}\}$, $\pi_{k-1}=\{c_1,c_2,\ldots\}$, $\pi_k=\{\{c_1,c_2\},\ldots\}$, $\pi_n=\{\{1,2,\ldots,n\}\}$.
\begin{figure}[!t]
	\centering
	\begin{tikzpicture}[ bend angle = 5, >=latex, font = \footnotesize ]
		\tikzstyle{lps} = [ circle, thick, draw = black!80, fill = OliveGreen, minimum size = 1mm, inner sep = 2pt ]
		\tikzstyle{ety} = [ minimum size = 1mm, inner sep = 2pt ]
		\begin{scope}[node distance = 1.5cm and 1cm]
			\node [ety] (t_) [ ] {};
			\node [ety] (t0) [ right of = t_, node distance = 2.5cm ] {$t_1=0$};
			\node [ety] (t1) [ right of = t0 ] {$t_2$};
			\node [ety] (t2) [ right of = t1 ] {$t_{k-1}$};
			\node [ety] (t3) [ right of = t2 ] {};
			\node [ety] (t4) [ right of = t3 ] {$t_k$};
			\node [ety] (t5) [ right of = t4 ] {$t_{k+1}$};
			\node [lps] (lp_1) at (2.5,-0.5) [label = -90:$\x_1$] {};
			\node [lps] (lp_2) at (2.5,-1.5) [label = -90:$\x_2$] {};
			\node [ety] (lp_3) at (2.5,-2.5) [label = -90:$\vdots$] {};
			\node [lps] (lp_4) at (2.5,-4.5) [label = -90:$\x_n$] {};
			\node [lps] (v1) at (4.0,-1.0) [ label = -90:$\z_{c_1}$ ] {}
				edge [pre, bend right] (lp_1) 
				edge [pre, bend left] (lp_2);
			\node[ety] (ve1) at (4.0,-3.0) [ label = 180:$\ldots$ ] {};
			\node[ety] (ve2) at (4.0,-4.0) [ label = 180:$\ldots$ ] {};
			\node [lps] (v2) at (5.5,-3.5) [ label = 90:$\z_{c_2}$ ] {}
				edge [pre, bend right] (ve1)
				edge [pre, bend left] (ve2);
			\node [lps] (v4) at (8.5,-2.5) [ label = -90:$\z_k$ ] {}
				edge [pre, bend right] (v1)
				edge [pre, bend left] (v2);
			\node [ety] (v5) at (10.0,-2.5) [ label = 0:$\ldots$ ] {}
				edge [pre] (v4);
			\node [ety] at (4.9,-0.6) [] {$q(\z_{c_1}|c_1,\t_{1:2})$};
			\node [ety] at (6.6,-3.9) [] {$q(\z_{c_2}|c_2,\t_{1:k-1})$};
			\node [ety] at (9.4,-2.1) [] {$q(\z_{c_1}|C,\t_{1:k})$};
			\node [ety] at (7.5,-1.3) [] {$p(\z_k|\z_{c_1},\t_{1:k})$};
			\node [ety] at (8.2,-3.35) [] {$p(\z_k|\z_{c_2},\t_{1:k})$};
		\end{scope}
	\end{tikzpicture}
	\caption[Segment of the binary tree structure.]{Binary tree structure. Latent variable $\t$ and $\bpi$ define merging points and merging sets, respectively.}
	\label{fg:stree}
\end{figure}
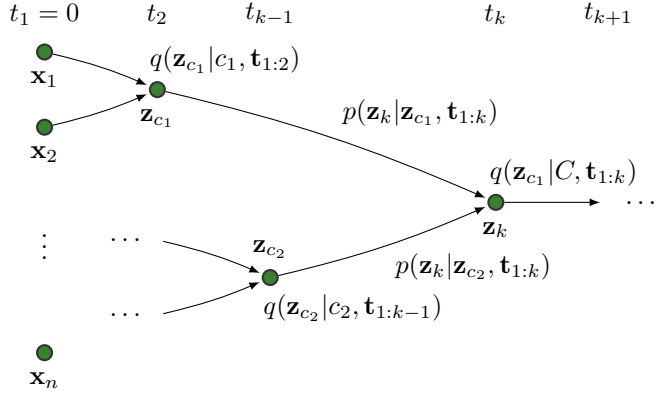

The joint distribution needed to perform inference can be obtained by combining likelihood and prior in equations \eqref{eq:lik} and \eqref{eq:prior} as
\begin{align} \label{eq:joint}
	p(\X,\t,\bpi,\cdot) = \prod_{k=1}^{n-1} \exp\left(-\tfrac{(n-k+1)(n-k)}{2}\Delta_k\right) Z_{k}(\X|\pi_k,\t_{1:k},\cdot) \,.
\end{align}
\subsection{Gaussian transition distributions}
We are interested in correlated continuous data, thus we assume a multivariate Gaussian distribution for the transition probability,
\begin{align*} 
	p(\z_k|\z_c,\t_{1:k},\bPhi) = \DN(\z_k|\z_c,\Delta_c\bPhi) \,,
\end{align*}
where $\bPhi$ is a covariance matrix encoding the correlation structure in $\z_k$ and $\Delta_c$ is the time elapsed between $t_k$ and $t_c$, not necessarily $t_{k-1}-t_{k}$ as can be seen in Figure~\ref{fg:stree}. We denote the time at which the set $c$ was created as $t_c$. Individual terms of the likelihood in equation~\eqref{eq:lik} can be computed using
\begin{align*}
	q(\z_k|\pi_k,\t_{1:k},\cdot) = & \ \DN(\z_k|\m_{c_1},\tilde{s}_{c_1}\bPhi)\DN(\z_k|\m_{c_2},\tilde{s}_{c_2}\bPhi) \,, \\
	= & \ Z_{k}(\X|\pi_k,\t_{1:k},\cdot) \DN(\z_k|s_k(\tilde{s}_{c_1}\inv\m_{c_1}+\tilde{s}_{c_2}\inv\m_{c_2}),s_k\bPhi) \,,
\end{align*}
where $\m_{c_1}$ and $s_{c_1}$ are mean and variance of $q(\z_{c_1}|\cdot)$, respectively, where $\tilde{s}_{c_1}=\Delta_{c_1}+s_{c_1}$, and where $\Delta_{c_1}=t_k-t_{c_1}$.  This leads to $s_k=(\tilde{s}_{c_1}\inv+\tilde{s}_{c_2}\inv)\inv$ and the normalization constant is
\begin{align*}
	Z_{k}(\X|\pi_k,\t_{1:k},\cdot) = & (2\pi)^{-d/2}|v_k\bPhi|^{-1/2}\exp\left(-\tfrac{1}{2}(\m_{c_1}-\m_{c_2})v_k\inv\bPhi\inv(\m_{c_1}-\m_{c_2})\ts\right) \,,
\end{align*}
where $v_k=\tilde{s}_{c_1}+\tilde{s}_{c_2}=2\Delta_k+r_k$ and $r_k=2t_{k-1}-t_{c_1}-t_{c_2}+s_{c_1}+s_{c_2}$. Note that $\Delta_{c_1}=\Delta_{c_2}$ only if $c_1$ and $c_2$ are singletons.
\section{Inference} \label{sc:inference}
Inference is carried out using a sequential Monte Carlo (SMC) sampling based upon equation \eqref{eq:joint} \citep[see][]{doucet01}. More precisely, for a set of $M$ particles, we approximate the posterior of the pair $\{\t,\bpi\}$ using a weighted sum of point masses obtained iteratively by drawing coalescing times $t_k$ and chain states $\pi_k$ one at a time from their posterior as
\begin{align} \label{eq:delta_pi_post}
	p(\Delta_k,\pi_k|\t_{1:k},\pi_{k-1},\cdot) \ = \ Z_{k,C}\inv\exp\left(-\tfrac{(n-k+1)(n-k)}{2}\Delta_k\right) Z_{k}(\X|\pi_k,\t_{1:k},\cdot) \,,
\end{align}
where $Z_{k,C}\inv$ is the normalization constant and $C$ is a pair of elements of $\pi_{k-1}$. In order to compute the weights required by SMC we need to compute $Z_{k,C}\inv$ for every pair in $\pi_{k-1}$.

Algorithms introduced by \citet{teh08} try to avoid the computational complexity of using equation \eqref{eq:delta_pi_post} directly by simplifying it or by means of greedy alternatives. They propose for instance to draw $\Delta_k$ from the prior so computing $Z_{k,C}\inv$ is no longer necessary thus reducing the computational cost. From equation~\eqref{eq:delta_pi_post} we see that $Z_{k,C}\inv$ needs to be computed for every pair in $\pi_{k-1}$ at every iteration of the sampler, simply because the rate of the exponential distribution is a function of $k$. We will show that by using some properties of the distributions involved in equation~\eqref{eq:delta_pi_post} we can effectively decrease the computational complexity of the SMC sampler with almost no performance penalty. In particular, we will show that the most expensive parts of $Z_{k,C}\inv$ need to be computed only once during inference.

We can expand the right hand side of equation \eqref{eq:delta_pi_post} as
\begin{align}
	& p(\Delta_k,\pi_k|\t_{1:k-1},\pi_{k-1},\cdot) \nonumber \\ 
	& \hspace{24mm} \propto \ \inv Z_{k}(\X|\pi_k,\t_{1:k},\cdot)\Exp(2\Delta_k+r_k|\lambda/2)\Exp(-r_k|\lambda/2) \,, \nonumber \\
	& \hspace{24mm} = \ Z_{k,C}\inv\GIG(2\Delta_k+r_k|\tilde{\lambda},\bepsilon_{k-1,C},\lambda) \,, \label{eq:delta_pi_modpost}
\end{align}
where $\lambda=(n-k+1)(n-k)/2$, $\tilde{\lambda}=1-d/2$, $\bepsilon_{k-1,C}=(\m_{c_1}-\m_{c_2})\bPhi\inv(\m_{c_1}-\m_{c_2})$, $C=\{c_1,c_2\}\in\pi_{k-1}$, $\GIG(\lambda,\chi,\psi)$ is the generalized inverse Gaussian with parameters $\{\lambda,\chi,\psi\}$ \citep{jorgensen82a}.  This leads to
\begin{align} \label{eq:pi_modpost}
	Z_{k,C} \ \propto & \ \frac{K_{\widetilde{\lambda}}(\sqrt{\lambda\bepsilon_{k-1,C}})}{(\lambda\bepsilon_{k-1,C}\inv)^{\widetilde{\lambda}/2}}\exp\left(\frac{\lambda}{2} r_k\right) \,,
\end{align}
where $K_{\nu}(z)$ is the modified Bessel function of second kind \citep{abramowitz65a}. The details on how to obtain equations \eqref{eq:delta_pi_modpost} and \eqref{eq:pi_modpost} can be found in Appendix~\ref{eq:delta_pi_details}. From Equation \eqref{eq:delta_pi_modpost} we have
\begin{align}
	\Delta_k|\pi_{k},\t_{1:k-1},\cdot \ \sim & \ \GIG(2\Delta_k+r_k|\tilde{\lambda},\bepsilon_{k,C},\lambda) \,, \label{eq:delta_sample} \\
	C^\star|\pi_{k-1},\t_{1:k-1},\cdot \ \sim & \ \Dis(C^\star|\w_{k-1}) \,, \label{eq:pi_sample}
\end{align}
where $\w_{k-1}$ is the vector of normalized weights, ranging over all pairs, computed using equation~\eqref{eq:pi_modpost} and $C^\star$ is the pair of $\pi_{k-1}$ that gets merged in $\pi_k$. Sampling equations \eqref{eq:pi_modpost}, \eqref{eq:delta_sample} and \eqref{eq:pi_sample} have useful properties, (i) the conditional posterior of $\pi_k$ does not depend on $\Delta_k$. (ii) Sampling from $\Delta_k$ amounts to draw from a truncated generalized inverse Gaussian distribution. (iii) We do not need to sample $\Delta_k$ for every pair in $\pi_{k-1}$, in fact we only need to do so for the merging pair $C^\star$. (iv) Although $\lambda$ in equation~\eqref{eq:pi_modpost} changes with $k$, the most expensive computation, $\bepsilon_{k-1,C}$ needs to be computed only once. (v) The distribution in equation~\eqref{eq:pi_modpost} has heavier tails than a Gaussian distribution and $Z_{k,C}\to\infty$ as $\bepsilon_{k-1,C}\to0$ for $d>1$. Furthermore, we can rewrite equation~\eqref{eq:pi_modpost} as
\begin{align}
	Z_{k,C} \ \propto & \ \frac{K_{d/2-1}(\sqrt{\lambda\bepsilon_{k-1,C}})}{(\lambda\inv\bepsilon_{k-1,C})^{-(d-4)/4}}\exp\left(\frac{\lambda}{2} r_k\right) \,, \label{eq:pi_modpost_rep} \\
	Z_{k,C} \ \approx & \ \bepsilon_{k-1,C}^{-(d-1)/4}\exp(-\sqrt{\lambda}\bepsilon_{k-1,C})\exp\left(\frac{\lambda}{2} r_k\right) \label{eq:pi_modpost_ne} \,,
\end{align}
where we have made a change of variables before marginalizing out $2\Delta_k+r_k$ and we have used the limiting form of $K_\nu(z)$ as $z\to\infty$ \citep{abramowitz65a}. \citet{eltoft06a} have called equation~\eqref{eq:pi_modpost_rep} multivariate Laplace distribution. When $d=1$, equation~\eqref{eq:pi_modpost_ne} is exact and is a univariate Laplace distribution. Nevertheless, equation~\eqref{eq:pi_modpost_ne} is particularly useful when $d$ is large as a cheap numerically stable alternative to $K_\nu(z)$.
\subsection{Sampling coalescing times}
Sampling from a generalized inverse Gaussian distribution is traditionally done using the \emph{ratio-of-uniforms} method of \citet{dagpunar89a}. We observed empirically that a slice sampler within the interval $(r_k/2,\Delta_0r_k/2)$ is considerably faster than the commonly used algorithm. Although we use $\Delta_0=10^2$ in all our experiments, we did try larger values without noticing significant changes in the results. The slice sampler used here is a standard implementation of the algorithm described by \citet{neal03a}. We acknowledge that adaptively selecting $\Delta_0$ at each step could improve the efficiency of the sampler however we did not investigate it.
\subsection{Covariance matrix}
Until now we assumed the covariance matrix $\bPhi$ as known, in most cases however the correlation structure of the observed data is hardly available. In practice we need to alternate between SMC sampling for the tree structure and drawing $\bPhi$ from some suitable distribution. For cases when observations exhibit additional structure, such as temporal or spatial data, we may assume the latent variable $\z_k$ is drawn from a Gaussian process with mean $\z_c$ and covariance function $\Delta_kg(i,j,\btheta)$, where entries of $\bPhi$ are computed using $\phi_{ij}=g(i,j,\btheta)$ for a set of hyperparameters $\btheta$. For example, we could use a squared exponential covariance function
\begin{align} \label{eq:sqexp_noise}
	g(i,j,\ell,\sigma^2) = \ \exp\left(-\frac{1}{2\ell}d_{ij}^2\right) + \sigma^2\delta_{ij} \,,
\end{align}
where $\btheta=\{\ell,\sigma^2\}$, $\delta_{ij}=1$ only if $i=j$ and $d_{ij}$ is the time between samples $i$ and $j$. The covariance function in equation~\eqref{eq:sqexp_noise} is a fairly general assumption for continuous signals. The smoothness of the process is controlled by the inverse length scale $\ell$ and the amount of idiosyncratic noise by $\sigma^2$. The elements of $\btheta$ are sampled by coordinate-wise slice sampling using the following function as proxy for the elements of $\btheta$,
\begin{align*}
	f(\btheta|\pi,\t) = \ \sum_{k=1}^{n-1} Z_{k}(\X|\pi_k,\t_{1:k},\cdot) \,.
\end{align*}
For the case when no smoothness is required but correlation structure is expected, conjugate inverse Wishart distributions for $\bPhi$ can be considered. For \iid data, a diagonal/spherical $\bPhi$ with independent inverse gamma priors is a good choice, as already proposed by \citet{teh08}.
\subsection{Greedy implementation}
As pointed out by \citet{teh08}, in some situations, a single good sample from the model is enough. Such a sample can be built by greedily maximizing equation~\eqref{eq:joint} one step at the time. This requires the computation of the mode of $\Delta_k$ from equation~\eqref{eq:delta_sample} for every pair in $\pi_{k-1}$ to then merge the pair with smallest $\Delta_k$, so
\begin{align*}
	\Delta_{k,C} = & \ \frac{1}{2\lambda}\left(\tilde{\lambda}+\sqrt{\tilde{\lambda}^2+\lambda\bepsilon_{k-1,C}}\right)-\frac{1}{2}r_k \,, \\
	C^\star = & \argmin_{\rho}\{\Delta_{k,C},C\in\pi_{k-1}\} \,.
\end{align*}
The greedy algorithm proposed by \citet{teh08} uses instead
\begin{align*}
	\Delta_{k,C}^{\rm prev} = \ \frac{1}{2\lambda}\left(-d+\sqrt{d^2+2\lambda\bepsilon_{k-1,C}}\right)-\frac{1}{2}r_k \,.
\end{align*}
The former proposal uses equation~\eqref{eq:delta_sample} whereas the latter uses equation~\eqref{eq:delta_pi_post} directly without taking into account $Z_{k,C}$. This means that using the properly normalized posterior of $\Delta_k$ leads only to a minor correction of $\Delta_{k,C}$. The following Lemma~\ref{lem:delta_rel} formalizes the relationship between the two proposals.
\begin{lemma}\label{lem:delta_rel}
	If $0<\bepsilon_{k-1,C}<\infty$, then the following holds
	\begin{enumerate}
		\item $\Delta_{k,C}/\Delta_{k,C}^{\rm prev}\to 1$ as $d\to\infty$.
		\item $\Delta_{k,C}/\Delta_{k,C}^{\rm prev}>1$ if $\lambda\bepsilon_{k-1,C}>4(d+2)$.
		\item $\Delta_{k,C}-\Delta_{k,C}^{\rm prev}$ decreases as $2\lambda\bepsilon_{k-1,C}/d^2+{\cal O}(d^{-3})$.
	\end{enumerate}
\end{lemma}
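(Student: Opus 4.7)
Write $a := \tilde\lambda + \sqrt{\tilde\lambda^2 + \lambda\bepsilon_{k-1,C}}$ and $b := -d + \sqrt{d^2 + 2\lambda\bepsilon_{k-1,C}}$, so that $\Delta_{k,C} - \Delta_{k,C}^{\rm prev} = (a-b)/(2\lambda)$ and $\tilde\lambda = 1 - d/2$. Since $r_k$, $\lambda$ and $\bepsilon_{k-1,C}$ are treated as fixed while $d \to \infty$, every claim reduces to a scalar comparison of $a$ and $b$.

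For (1), I would rationalize using the conjugate: $a = \lambda\bepsilon_{k-1,C}/(-\tilde\lambda + \sqrt{\tilde\lambda^2 + \lambda\bepsilon_{k-1,C}})$ and $b = 2\lambda\bepsilon_{k-1,C}/(d + \sqrt{d^2 + 2\lambda\bepsilon_{k-1,C}})$. Both denominators are asymptotic to $d$ and $2d$ respectively, so $a, b \to 0$ with common leading term $\lambda\bepsilon_{k-1,C}/d$. Hence $\Delta_{k,C}$ and $\Delta_{k,C}^{\rm prev}$ both tend to the common value $-r_k/2$, and their ratio tends to $1$.

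For (2), I would pin down the sign of $a - b$ by a double-squaring argument. The inequality $a > b$ rearranges to $1 + d/2 + \sqrt{(1-d/2)^2 + \lambda\bepsilon_{k-1,C}} > \sqrt{d^2 + 2\lambda\bepsilon_{k-1,C}}$ with both sides positive. Squaring once uses the collapse $(1+d/2)^2 + (1-d/2)^2 = 2 + d^2/2$ and reduces the comparison to $(d+2)\sqrt{(1-d/2)^2 + \lambda\bepsilon_{k-1,C}} > (d^2-4)/2 + \lambda\bepsilon_{k-1,C}$; for $d > 2$ both sides remain positive. Squaring again cancels the common term $(d+2)^2(1-d/2)^2 = ((d^2-4)/2)^2$, leaving $4\lambda\bepsilon_{k-1,C}(d+2) > \lambda^2\bepsilon_{k-1,C}^2$, equivalent to $\lambda\bepsilon_{k-1,C} < 4(d+2)$. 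Thus $a < b$ iff $\lambda\bepsilon_{k-1,C} > 4(d+2)$, which, given that $\Delta_{k,C}$ and $\Delta_{k,C}^{\rm prev}$ share a sign in the relevant regime, gives the claimed ratio inequality.

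For (3), I would Taylor-expand both radicals in $1/d$. Writing $\sqrt{\tilde\lambda^2 + \lambda\bepsilon_{k-1,C}} = (d/2)\sqrt{1+y}$ with $y = -4/d + 4/d^2 + 4\lambda\bepsilon_{k-1,C}/d^2$ and using $\sqrt{1+y} = 1 + y/2 - y^2/8 + y^3/16 + O(y^4)$, one must carry the expansion through $y^3$ because $-4/d$ inside $y$ is the asymptotically dominant contribution; the $O(d^{-2})$ terms from $-y^2/8$ and $y^3/16$ combine to give $a = \lambda\bepsilon_{k-1,C}/d + 2\lambda\bepsilon_{k-1,C}/d^2 + O(d^{-3})$. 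The simpler expansion $\sqrt{d^2 + 2\lambda\bepsilon_{k-1,C}} = d + \lambda\bepsilon_{k-1,C}/d + O(d^{-3})$ gives $b = \lambda\bepsilon_{k-1,C}/d + O(d^{-3})$, so $a - b = 2\lambda\bepsilon_{k-1,C}/d^2 + O(d^{-3})$, the stated rate. The main obstacle is precisely this bookkeeping: both the $O(d)$ and $O(1)$ contributions from $\tilde\lambda + (d/2)\sqrt{1+y}$ cancel, and the leading $O(1/d)$ terms of $a$ and $b$ coincide, so the first surviving difference lives at order $1/d^2$ and requires tracking three separate Taylor corrections in the radical for $a$ to recover the correct coefficient.
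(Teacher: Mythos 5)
Your parts (1) and (3) are correct. For (1) you rationalize and match leading coefficients instead of the paper's ``repeatedly apply L'Hospital's rule''; this is cleaner and equally valid. For (3) your Taylor bookkeeping is the paper's stated route carried out explicitly, and the cancellation pattern you describe (the $O(d)$ and $O(1)$ terms of $a$ cancel against $\tilde\lambda$, the $O(1/d)$ terms of $a$ and $b$ coincide, and the first surviving difference is $2\lambda\bepsilon_{k-1,C}/d^2$) is exactly right; note only that this is the expansion of $a-b=\widetilde\Delta_{k,C}-\widetilde\Delta_{k,C}^{\rm prev}$, which differs from $\Delta_{k,C}-\Delta_{k,C}^{\rm prev}$ by the constant factor $2\lambda$ --- immaterial for a rate statement, and evidently the same identification the paper makes. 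Your double-squaring in (2) is also sound as far as it goes and is substantively the paper's argument: the paper's ``system of equations'' is just the pair of quadratics satisfied by $\widetilde\Delta_{k,C}=2\Delta_{k,C}+r_k$ under the two formulas, whose common solution is $\widetilde\Delta_{k,C}=4$, $\lambda\bepsilon_{k-1,C}=4(d+2)$; you go further by determining which side of the threshold gives $a<b$, which the paper never does.

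The gap is the final sentence of your part (2). From $a<b$ you cannot conclude $\Delta_{k,C}/\Delta_{k,C}^{\rm prev}>1$ merely because the two quantities ``share a sign'': if the shared sign is positive, $a<b$ gives $\Delta_{k,C}<\Delta_{k,C}^{\rm prev}$ and hence a ratio strictly less than $1$, and the shared-positive case is the generic one. Concretely, take $d=1$, $\lambda=1$, $\bepsilon_{k-1,C}=100>12=4(d+2)$ and $r_k=0$ (recall $r_k\ge 0$ and vanishes when both children are leaves): then $a=\tfrac12+\sqrt{100.25}\approx 10.51$, $b=-1+\sqrt{201}\approx 13.18$, so $\Delta_{k,C}\approx 5.26$, $\Delta_{k,C}^{\rm prev}\approx 6.59$ and the ratio is $\approx 0.80<1$. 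The claimed inequality holds only when both mode estimates are negative, i.e.\ only for $r_k$ large enough ($r_k>b/\lambda$), so you must either restrict to that regime explicitly or restate the conclusion as a statement about the sign of $\Delta_{k,C}-\Delta_{k,C}^{\rm prev}$. In fairness, the paper's own proof of (2) stops at the crossing point and never addresses the direction of the ratio at all, so the imprecision originates in the lemma statement; but your parenthetical justification, as written, asserts something that fails in the most natural case and needs to be repaired rather than waved through.
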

\begin{proof}
	For (1) is enough to take $\lim_{d\to\infty}\Delta_{k,C}/\Delta_{k,C}^{\rm prev}$ by repeatedly applying L'Hospital's rule. (2) is obtained as the solution to the following system of equations
	\begin{align*}
		\widetilde{\Delta}_{k,C}^2+2d\widetilde{\Delta}_{k,C} & = 2\lambda\bepsilon_{k-1,C} \,, \\
		\widetilde{\Delta}_{k,C}^2+(d-2)\widetilde{\Delta}_{k,C} & = \lambda\bepsilon_{k-1,C} \,,
	\end{align*}
	with solution $\Delta_{k,C}=2-r_k/2$ and $\lambda\bepsilon_{k-1,C}=4(d+2)$ for $\widetilde{\Delta}_{k,C}=(2\Delta_{k,C}+r_k)$. Lastly, (3) is the result of a Taylor series expansion of $\Delta_{k,C}-\Delta_{k,C}^{\rm prev}$.
\end{proof}
Lemma~\ref{lem:delta_rel} implies that $\Delta_{k,C}^{\rm prev}$ only matches the true maximum a-posteriori estimate, $\Delta_{k,C}$, when $\lambda\bepsilon_{k-1,C}=4(d+2)$, thus $\Delta_{k,C}^{\rm prev}$ is a biased estimator of $\t$ almost everywhere. Besides, for small values of $d$, the difference between $\Delta_{k,C}$ and $\Delta_{k,C}^{\rm prev}$ could be large enough to make the outcome of both algorithms significantly different.
\subsection{Computational cost}
The computational cost of using directly equation \eqref{eq:delta_pi_post} to sample from $\t$ and $\bpi$ for a single particle is ${\cal O}(\kappa_1n^3)$, where $\kappa_1$ is the cost of drawing the merging time of a single candidate pair, as already mentioned by \citet{teh08}. Using equation \eqref{eq:delta_pi_modpost} costs ${\cal O}(\kappa_2n^3 + \kappa_1n)$, where $\kappa_2$ is the cost of computing $Z_{k,C}$ for a single candidate pair. Since $\kappa_2>>\kappa_1$, using equation \eqref{eq:delta_pi_modpost} is much faster than what proposed by \citet{teh08}, at least for moderately large $n$. From a closer look at equation~\eqref{eq:pi_modpost} we see that the only variables changing with $k$ are $\lambda$ and $r_k$, but also that the only costly operation in it is the modified Bessel function provided we have previously cached $\bepsilon_{:,C}$. We can approximate equation~\eqref{eq:pi_modpost} by
\begin{align} \label{eq:pi_fmodpost}
	Z_{k,C} \ \propto & \ \frac{K_{\widetilde{\lambda}}(\sqrt{\bepsilon_{k-1,C}})}{(\bepsilon_{k-1,C}\inv)^{\widetilde{\lambda}/2}}\exp\left(\frac{\lambda}{2} r_k\right) \,,
\end{align}
this is, we have just dropped $\lambda$ from the Bessel function and the divisor in equation~\eqref{eq:pi_modpost}, which is acceptable because (i) $K_{\nu}(z)$ is strictly decreasing for fixed $\nu$ and (ii) the $\lambda$ term appearing in the divisor is a constant in $\log Z_{k,C}$. Note that a similar reasoning can be applied to equation~\eqref{eq:pi_modpost_ne}, which is cheaper and more numerically stable. Since equation~\eqref{eq:pi_fmodpost} does depend on $k$ only through $\lambda r_k$, we have virtually decreased the cost from ${\cal O}(\kappa_2n^3 +\kappa_3n)$ to ${\cal O}(\kappa_2n^2 + \kappa_3n)$, that is, we need to compute equation~\eqref{eq:pi_fmodpost} for every possible pair only once before selecting the merging pair at stage $k$, then we add $\lambda r_k/2$ (in log-domain) before sampling its merging time. From now on we use {\sc MPost1} to refer to the algorithm using equation \eqref{eq:delta_pi_modpost} and {\sc MPost2} to the fast approximation in equation~\eqref{eq:pi_fmodpost}.

Recently, \citet{gorur08} proposed an efficient SMC sampler ({\sc SMC1}) for hierarchical clustering with coalescents that although was introduced for discrete data can be easily adapted to continuous data. Their approach is based in a regenerative race process in which every possible pair candidate proposes a merging time only once leading to ${\cal O}(\kappa_1n^2)$ computational time. In principle, {\sc SMC1} has quadratic cost as {\sc MPost2}, however since $\kappa_2>>\kappa_1$ our approximation is considerably faster. In addition, we have observed empirically that at least for $n$ in the lower hundreds, {\sc MPost1} is faster than {\sc SMC1} as well.

The key difference between our approach and that of \citet{gorur08} is that the latter proposes merging times for every possible pair and selects the pair to merge as the minimum available at a given stage whereas our approach selects the pair to merge and samples the merging time independently. Additionally, they do not sample merging times using $\Delta_k=t_k-t_{k-1}$ but directly $t_k|t_C$, where $t_C$ is the time at which the pair $C$ was created, thus $0\leq t_C < t_k$. As $t_C$ is usually smaller than $t_{k-1}$, {\sc SMC1} draw the vector $\t$ in larger jumps compared to {\sc MPost1/2}. This suggests that our approach will have in general better mixing properties as we will show empirically in the next section.
\section{Numerical results} \label{sc:results}
In this section we consider a number of experiments on both artificial and real data to highlight the benefits of the proposed approaches as well as to compare them with previously proposed ones. In total three artificial data based simulations and two real data set applications are presented. All experiments are obtained using a desktop machine with 2.8GHz processor with 8GB RAM and run times are measured as single core CPU times.
\subsection{Artificial data - structure}
First we compare different sampling algorithms on artificially generated data using $n$-coalescents and Gaussian processes with known squared exponential covariance functions as priors. We generated 50 replicates of two different settings, $D_1$ and $D_2$ of sizes $\{n,d\}=\{32,32\}$ and $\{64,64\}$, respectively. We compare four different algorithms, {\sc Post-Post} \citep{teh08}, {\sc SCM1} \citep{gorur08}, {\sc MPost1} and {\sc MPost2}. In each case we collect $M=100$ particles and set the covariance function parameters $\{\ell,\sigma^2\}$ to their true values. As performance measures we track runtime ({\sc rt}) as proxy to the computational cost, mean squared error ({\sc mse}), mean absolute error ({\sc mae}) and maximum absolute bias ({\sc mab}) of $\t$ and $\bpi$ in log domain, as $\t$ grows exponentially fast. For the latter we compute the distance matrix encoded by $\{\t,\bpi\}$. Table~\ref{tb:artificial_structure} shows performance measures averaged over 50 replicates for each data set. In terms of error, we see that all four algorithms perform about the same, however with {\sc MPost1} and {\sc SMC1} being slightly better and slightly worse, respectively. The computational cost is significantly higher for the {\sc Post-Post} approach whereas {\sc MPost2} is the fastest. We see {\sc MPost1} and {\sc MPost2} consistently outperforming the other two algorithms as an indication of better mixing properties. In more general terms, {\sc MPost2} provides the best error/computational cost trade-off as the difference in accuracy between {\sc MPost1} and {\sc MPost2} is rather minimal.
\begin{table}[!t]
	\centering
	\begin{tabular}{cccccc}
	\hline
	Set & Measure & {\sc Post-post} & {\sc MPost1} & {\sc SMC1} & {\sc MPost2} \\
	\hline
	\multicolumn{6}{l}{Merge time ($\t$)} \\
	\multirow{3}{*}{$D_1$} & $10^{1}\times${\sc mse} & $0.52\pm0.22$ & $\bf 0.44\pm0.18$ & $0.66\pm0.28$ & $0.45\pm0.18$ \\
	& $10^{1}\times${\sc mae} & $1.88\pm0.45$ & $\bf 1.68\pm0.39$ & $2.01\pm0.46$ & $1.72\pm0.40$ \\
	& $10^{1}\times${\sc mab} & $5.13\pm1.23$ & $4.93\pm1.12$ & $6.09\pm1.30$ & $\bf 4.91\pm1.08$ \\
	\multirow{3}{*}{$D_2$} & $10^{2}\times${\sc mse} & $4.06\pm1.08$ & $\bf 3.04\pm0.90$ & $5.37\pm2.23$ & $3.30\pm0.94$ \\
	& $10^{1}\times${\sc mae} & $1.73\pm0.26$ & $\bf 1.42\pm0.26$ & $1.83\pm0.39$ & $1.49\pm0.27$ \\
	& $10^{1}\times${\sc mab} & $4.47\pm0.73$ & $4.40\pm0.78$ & $5.97\pm1.39$ & $\bf 4.39\pm0.73$ \\
	\multicolumn{6}{l}{Distance matrix ($\bpi$)} \\
	\multirow{3}{*}{$D_1$} & $10^{1}\times${\sc mse} & $0.79\pm0.50$ & $0.70\pm0.49$ & $1.32\pm0.63$ & $\bf 0.70\pm0.42$ \\
	& $10^{1}\times${\sc mae} & $2.24\pm0.78$ & $\bf 2.13\pm0.76$ & $3.03\pm0.90$ & $2.14\pm0.72$ \\
	& $10^{1}\times${\sc mab} & $6.77\pm1.20$ & $6.50\pm1.12$ & $8.77\pm1.96$ & $\bf 6.48\pm1.13$ \\
	\multirow{3}{*}{$D_2$} & $10^{2}\times${\sc mse} & $5.79\pm3.15$ & $\bf 4.89\pm2.92$ & $11.36\pm5.68$ & $5.27\pm2.94$ \\
	& $10^{1}\times${\sc mae} & $1.95\pm0.68$ & $\bf 1.78\pm0.65$ & $2.85\pm0.83$ & $1.85\pm0.65$ \\
	& $10^{1}\times${\sc mab} & $6.06\pm0.79$ & $\bf 5.75\pm0.73$ & $8.54\pm2.01$ & $5.81\pm0.69$ \\
	\multicolumn{6}{l}{Computational cost} \\
	\multirow{1}{*}{$D_1$} & $10^{0}\times${\sc rt} & $18.65\pm0.24$ & $2.29\pm0.04$ & $3.76\pm0.07$ & $\bf 1.98\pm0.03$ \\
	\multirow{1}{*}{$D_2$} & $10^{-1}\times${\sc rt} & $14.50\pm0.06$ & $1.08\pm0.00$ & $1.39\pm0.01$ & $\bf 0.61\pm0.00$ \\
	\hline
	\end{tabular}
	\caption{Performance measures for structure estimation. {\sc mse}, {\sc mae}, {\sc mab} and {\sc rt} are mean squared error, mean absolute error and maximum absolute bias, and runtime in seconds, respectively. Figures are means and standard deviations across 50 replicates. Best results are in boldface letters.}
	\label{tb:artificial_structure}
\end{table}

\subsection{Artificial data - covariance}
Next we want to test the different sampling algorithms when the parameters of the Gaussian process covariance function, $\{\ell,\sigma^2\}$, need to be learned as well. We use settings similar to those in the previous experiment with the difference that now $M=50$ particles are collected and $N_{\rm iter}=50$ iterations are performed to learn the covariance matrix parameters. We dropped the first 10 iterations as burn-in period. In addition to the previously mentioned performance measures we also compute {\sc mse}, {\sc mae} and {\sc mab} for the inverse length scale $\ell$ from equation~\eqref{eq:sqexp_noise}. In order to simplify the experiment we set a priori $\sigma^2=1\times10^{-9}$ to match a \emph{noiseless} scenario, however similar results are obtained when learning both parameters at the same time (results not shown). Table~\ref{tb:artificial_covariance} shows an overall similar trend when compared to Table~\ref{tb:artificial_structure}. In terms of covariance function parameter estimation, we see all algorithms perform about the same which is not surprising considering they use the same sampling strategy.
\begin{table}[!t]
	\centering
	\begin{tabular}{ccccc}
	\hline
	Set & Measure & {\sc MPost1} & {\sc SMC1} & {\sc MPost2} \\
	\hline
	\multicolumn{5}{l}{Merge time ($\t$)} \\
	\multirow{3}{*}{$D_1$} & $10^{1}\times${\sc mse} & $\bf 1.20\pm0.51$ & $1.26\pm0.44$ & $1.24\pm0.53$ \\
	& $10^{1}\times${\sc mae} & $3.02\pm0.80$ & $\bf 2.98\pm0.62$ & $3.06\pm0.81$ \\
	& $10^{1}\times${\sc mab} & $\bf 6.43\pm1.26$ & $7.01\pm1.50$ & $6.52\pm1.38$ \\
	\multirow{3}{*}{$D_2$} & $10^{2}\times${\sc mse} & $\bf 5.38\pm1.66$ & $6.32\pm1.94$ & $5.61\pm1.76$ \\
	& $10^{1}\times${\sc mae} & $2.02\pm0.36$ & $\bf 2.01\pm0.36$ & $2.07\pm0.36$ \\
	& $10^{1}\times${\sc mab} & $4.75\pm0.72$ & $6.16\pm1.35$ & $\bf 4.73\pm0.62$ \\
	\multicolumn{5}{l}{Distance matrix ($\bpi$)} \\
	\multirow{3}{*}{$D_1$} & $10^{1}\times${\sc mse} & $\bf 1.31\pm0.87$ & $2.85\pm1.76$ & $1.33\pm0.86$ \\
	& $10^{1}\times${\sc mae} & $\bf 2.94\pm1.19$ & $4.53\pm1.69$ & $2.96\pm1.19$ \\
	& $10^{1}\times${\sc mab} & $\bf 8.77\pm2.18$ & $9.60\pm1.73$ & $8.84\pm2.13$ \\
	\multirow{3}{*}{$D_2$} & $10^{1}\times${\sc mse} & $\bf 0.64\pm0.35$ & $1.54\pm0.67$ & $0.66\pm0.34$ \\
	& $10^{1}\times${\sc mae} & $\bf 2.08\pm0.63$ & $3.29\pm0.94$ & $2.08\pm0.65$ \\
	& $10^{1}\times${\sc mab} & $6.77\pm1.13$ & $8.41\pm1.42$ & $\bf 6.76\pm1.15$ \\
	\multicolumn{5}{l}{Inverse length scale ($\ell$)} \\
	\multirow{3}{*}{$D_1$} & $10^{4}\times${\sc mse} & $\bf 2.36\pm3.20$ & $2.93\pm4.51$ & $2.38\pm3.23$ \\
	& $10^{2}\times${\sc mae} & $\bf 1.17\pm0.99$ & $1.24\pm1.09$ & $1.18\pm0.99$ \\
	& $10^{2}\times${\sc mab} & $1.40\pm1.14$ & $2.06\pm2.18$ & $\bf 1.38\pm1.15$ \\
	\multirow{3}{*}{$D_2$} & $10^{4}\times${\sc mse} & $2.86\pm3.22$ & $3.55\pm4.48$ & $\bf 2.83\pm3.17$ \\
	& $10^{2}\times${\sc mae} & $1.35\pm1.02$ & $1.44\pm1.14$ & $\bf 1.34\pm1.01$ \\
	& $10^{2}\times${\sc mab} & $1.57\pm1.29$ & $2.39\pm2.24$ & $\bf 1.55\pm1.27$ \\
	\multicolumn{3}{l}{Computational cost} & \\
	\multirow{1}{*}{$D_1$} & $10^{-1}\times${\sc rt} & $5.69\pm0.02$ & $13.65\pm0.06$ & $\bf 4.86\pm0.02$ \\
	\multirow{1}{*}{$D_2$} & $10^{-2}\times${\sc rt} & $2.72\pm0.07$ & $5.12\pm0.05$ & $\bf 1.49\pm0.01$ \\
	\hline
	\end{tabular}
	\vspace{1mm}
	\caption{Performance measures for covariance estimation. {\sc mse}, {\sc mae}, {\sc mab} and {\sc rt} are mean squared error, mean absolute error and maximum absolute bias, and runtime in seconds, respectively. Figures are means and standard deviations across 50 replicates. Best results are in boldface letters.}
	\label{tb:artificial_covariance}
\end{table}

\subsection{Artificial data - greedy algorithm}
As final simulation based on artificial data we want to test wether the correction to the greedy algorithm of \citet{teh08} makes any differences performance-wise. We generated 50 replicates of two different settings  $D_1$ and $D_2$ of sizes $\{n,d\}=\{32,32\}$ and $\{128,128\}$, respectively. We run $N_{\rm iter}$ iterations of the algorithm and drop the first 10 samples as burn-in period. The performance measures are the same as in the previous experiment. Table~\ref{tb:artificial_greedy} shows that the corrected algorithm performs consistently better than the original when the data set is small. When the data set is larger the difference between the two algorithms diminishes however still favors {\sc MGreedy}. Although not shown, we tried other settings in between $D_1$, $D_2$ and larger than $D_2$ with consistent results, this is, the difference between {\sc Greedy} and {\sc MGreedy} decreases with the size of the dataset.
\begin{table}[!t]
	\centering
	\begin{tabular}{cccccc}
		\hline
		\multicolumn{3}{c}{$D_1$} & \multicolumn{3}{c}{$D_2$} \\
		Measure & {\sc Greedy} & {\sc MGreedy} & Measure & {\sc Greedy} & {\sc MGreedy} \\
		\hline
		\multicolumn{6}{c}{Merge time ($\t$)} \\
		$10^{1}\times${\sc mse} & $1.26\pm0.48$ & $\bf 0.90\pm0.36$ & $10^{2}\times${\sc mse} & $0.98\pm0.23$ & $\bf 0.78\pm0.19$ \\
		$10^{1}\times${\sc mae} & $3.10\pm0.67$ & $\bf 2.53\pm0.59$ & $10^{1}\times${\sc mae} & $0.81\pm0.11$ & $\bf 0.71\pm0.10$ \\
		$10^{1}\times${\sc mab} & $7.03\pm1.54$ & $\bf 6.38\pm1.50$ & $10^{1}\times${\sc mab} & $2.74\pm0.49$ & $\bf 2.59\pm0.49$ \\
		\multicolumn{6}{c}{Distance matrix ($\bpi$)} \\
		$10^{1}\times${\sc mse} & $1.33\pm0.74$ & $\bf 1.02\pm0.60$ & $10^{1}\times${\sc mse} & $0.14\pm0.11$ & $\bf 0.12\pm0.09$ \\
		$10^{1}\times${\sc mae} & $3.04\pm1.02$ & $\bf 2.58\pm0.93$ & $10^{1}\times${\sc mae} & $0.95\pm0.38$ & $\bf 0.87\pm0.35$ \\
		$10^{1}\times${\sc mab} & $8.90\pm1.70$ & $\bf 8.24\pm1.69$ & $10^{1}\times${\sc mab} & $4.05\pm0.69$ & $\bf 3.90\pm0.68$ \\
		\multicolumn{6}{c}{Inverse length scale ($\ell$)} \\
		$10^{4}\times${\sc mse} & $2.03\pm2.61$ & $\bf 2.03\pm2.58$ & $10^{4}\times${\sc mse} & $\bf 2.79\pm3.54$ & $2.79\pm3.56$ \\
		$10^{2}\times${\sc mae} & $1.10\pm0.91$ & $\bf 1.10\pm0.90$ & $10^{2}\times${\sc mae} & $\bf 1.32\pm1.03$ & $\bf 1.32\pm1.03$ \\
		$10^{2}\times${\sc mab} & $1.24\pm1.00$ & $\bf 1.24\pm0.99$ & $10^{2}\times${\sc mab} & $\bf 1.39\pm1.06$ & $1.43\pm1.15$ \\
		\multicolumn{6}{c}{Computational cost} \\
		$10^{0}\times${\sc rt} & $\bf 1.65\pm0.09$ & $1.66\pm0.08$ & $10^{-1}\times${\sc rt} & $3.86\pm2.42$ & $\bf 3.79\pm2.31$ \\
		\hline
	\end{tabular}
	\vspace{1mm}
	\caption{Performance measures for greedy algorithms. {\sc mse}, {\sc mae}, {\sc mab} and {\sc rt} are mean squared error, mean absolute error and maximum absolute bias, and runtime in seconds, respectively. Figures are means and standard deviations across 50 replicates. Best results are in boldface letters.}
	\label{tb:artificial_greedy}
\end{table}

\subsection{Handwritten digits}
The USPS database\footnote{Data available from \url{http://cs.nyu.edu/~roweis/data.html}.} contains 9289 grayscale images of $16\times16$ pixels in size, scaled to fall within the range $[-1,1]$. Here we use subsets of 500 images, 50 from each digit, randomly selected from the full data set. We apply {\sc MPost2}, {\sc MGreedy} and average-link agglomerative clustering ({\sc HC}) to 25 of such subsets. For the covariance matrix $\bPhi$ we use a Matérn covariance function with parameter $\nu=3/2$ and additive noise defined as follows
\begin{align*}
	g(i,j,\ell_x,\ell_y,\sigma^2) = \left(1+\frac{\sqrt{3}}{\ell_x}d_{x,ij}\right)\left(1+\frac{\sqrt{3}}{\ell_y}d_{y,ij}\right)\exp\left( -\frac{\sqrt{3}}{\ell_x}d_{x,ij}+\frac{\sqrt{3}}{\ell_y}d_{y,ij} \right) + \sigma^2\delta_{ij} \,,
\end{align*}
where $d_{x,ij}$ and $d_{y,ij}$ are distances in the two axes of the image, and we have assumed axis-wise independency \citep{rasmussen06}.
\subsubsection{Performance metrics}
As performance measures we use (i) the \emph{subtree} score defined as $N_{\rm subset}/(n-C)$, where $N_{\rm subset}$ is the number of internal nodes with leaves from the same class and $C$ is the number of classes \citep{teh08}, and (ii) the area under the adjusted Rand index (ARI) curve (AUC). ARI is a similarity measure for pairs of data partitions that take values between 0 and 1, the latter indicating that the two partitions are exactly the same \citep{hubert85a}. Although ARI has been extensively used for clustering assessment, its use in hierarchical clustering requires the tree to be cut to obtain a single partition of data. Provided we can obtain $n$ different partitions from hierarchical clustering on $n$ observations, we can compute ARI for all partitions using a majority voting rule to label internal nodes of the tree structure. If we plot ARI vs number of clusters $N_c$ we obtain a graphical representation that resembles a ROC curve. When all partitions have a correct label, ARI will be 1 for $N_c>C$ and in any other case, ARI will increase with $N_c$. Besides, when $N_c=1$ and $N_c=n$, ARI is always 0 and 1, respectively. Just like in a ROC curve, an algorithm is as good as its ARI's rate of change thus we can asses the overall performance by computing the area under the ARI curve. Figure~\ref{fg:usps_auc} shows curves for a particular data set and the three considered algorithms. We also included results obtained by tree structures drawn from the coalescent prior as reference.
\begin{figure}[!t]
	\begin{minipage}[c]{0.38\linewidth}
		\centering
		\begin{psfrags}
			\psfrag{ari}[c][c][0.5][0]{ARI}
			\psfrag{nc}[c][c][0.5][0]{$N_c$}
			\psfrag{prior}[c][c][0.4][0]{\hspace{19mm}\sc Prior (0.36)}
			\psfrag{hc}[c][c][0.4][0]{\hspace{19mm}\sc HC (0.83)}
			\psfrag{mgreedy (x.xxxx)}[c][c][0.4][0]{\sc MGreedy (0.86)}
			\psfrag{mpost2}[c][c][0.4][0]{\hspace{18mm}\sc MPost2 (0.88)}
			\includegraphics[scale=0.35]{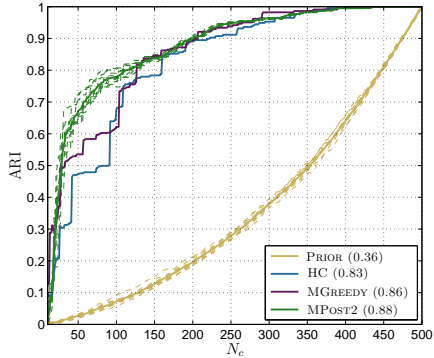}
		\end{psfrags}
	\end{minipage}
	\begin{minipage}[c]{0.6\linewidth}
		\centering
		\begin{tabular}{cccc}
			\hline
			& {\sc MPost2} & {\sc MGreedy} & {\sc HC} \\
			\hline
			Subtree & $0.77\pm0.02$ & $\bf 0.78\pm0.02$ & $0.76\pm0.02$ \\
			AUC & $\bf 0.89\pm0.01$ & $0.88\pm0.02$ & $0.85\pm0.02$ \\ 
			RT & $17.30\pm0.17$ & $2.65\pm0.05$ & $<1.00$ \\
			\hline
		\end{tabular}
	\end{minipage}
	\caption{USPS digits results. (Left) USPS data ARI curves. {\sc Prior} draws structures directly from a $n$-coalescent prior and {\sc HC} is standard hierarchical clustering with average link function and euclidean distance metric. Figures in parenthesis are AUC scores. (Right) Subtree scores, AUC is area under the ARI curve and RT is runtime in minutes. Figures are means and standard deviations across 25 replicates. Best results are in boldface letters.}
	\label{fg:usps_auc}
\end{figure}

Table in Figure~\ref{fg:usps_auc} shows average subset scores for the algorithms considered. We performed inference for 20 iterations and 10 particles. Not substantial improvement was found by increasing $N_{\rm iter}$ or $M$. {\sc Greedy} produced the same results as {\sc MGreedy} and {\sc SMC1} did not performed better than {\sc MPoost1/2} but it took approximately 10 times longer to run (results not shown). We see that coalescent based algorithms perform considerable better than standard hierarchical cluster in terms of AUC. Besides, {\sc MPost1} and {\sc MPost2} are best in subtree scores and AUC, respectively.
\subsection{Motion capture data}
We apply {\sc MPost2} to learn hierarchical structures in motion capture data (MOCAP). The data set consist of 102 time series of length 217 corresponding to the coordinates of a set of 34 three dimensional markers placed on a person breaking into run\footnote{Data available from \url{http://accad.osu.edu/research/mocap/mocap_data.htm}.}. For the covariance matrix $\bPhi$, we used the squared exponential function in equation~\eqref{eq:sqexp_noise}. Results are obtained after running 50 iterations of {\sc MPost2} with 50 particles. It took approximately 5 minutes to complete the run. Left panel in Figure~\ref{fg:mocap_tree} shows two subtrees containing data from all markers in the $X$ and $Z$ axes. Aiming to facilitate visualization, we relabeled the original markers to one of the following: head (Head), torso (Torso), right leg (Leg:R), left leg (Leg:L), right arm (Arm:R) and left arm (Arm:L). The subtrees from Figure~\ref{fg:mocap_tree} are obtained from the particle with maximum weight (0.129) at the final iteration of the run with effective sample size 24.108. We also examined the trees for the remaining particles and noted no substantial structural differences with respect to Figure~\ref{fg:mocap_tree}. The resulting tree has interesting features: (i) Sensors from different coordinates are put together. (ii) Leg markers have in general larger merging times than the others, whereas the opposite is true for head markers. (iii) The obtained tree fairly agrees with the structure of the human body, for instance in the middle-right panel of Figure~\ref{fg:mocap_tree} we see a heat map with 9 markers, 4 of them from the head, 1 from the torso (C7, base of the neck) and 4 from the arms (shoulders and upper arms). The two arm sensors close to the torso correspond to the shoulders while the other two---with larger merging times, are located in the upper arms. We observed that the obtained structure is fairly robust to changes in the number of iterations and particles. We also point out that {\sc MPost1}, {\sc GreedyNew}, {\sc SMC1}, and {\sc Post} produce structurally similar trees to the one shown in Figure~\ref{fg:mocap_tree}, however with different running times. In particular, they took 7, 1, 12 and 75 minutes, respectively.
\begin{figure}[!t]
	\centering
		\begin{psfrags}
			\psfrag{leg:l:z}[c][c][0.4][0]{Leg:L:Z}
			\psfrag{leg:l:y}[c][c][0.4][0]{Leg:L:Y}
			\psfrag{leg:r:z}[c][c][0.4][0]{Leg:R:Z}
			\psfrag{leg:r:y}[c][c][0.4][0]{Leg:R:Y}
			\psfrag{arm:l:z}[c][c][0.4][0]{Arm:L:Z}
			\psfrag{arm:l:y}[c][c][0.4][0]{Arm:L:Y}
			\psfrag{arm:r:z}[c][c][0.4][0]{Arm:R:Z}
			\psfrag{arm:r:y}[c][c][0.4][0]{Arm:R:Y}
			\psfrag{head:z}[c][c][0.4][0]{Head:Z}
			\psfrag{head:y}[c][c][0.4][0]{Head:Y}
			\psfrag{torso:z}[c][c][0.4][0]{Torso:Z}
			\psfrag{torso:y}[c][c][0.4][0]{Torso:Y}
			\includegraphics[scale=0.35]{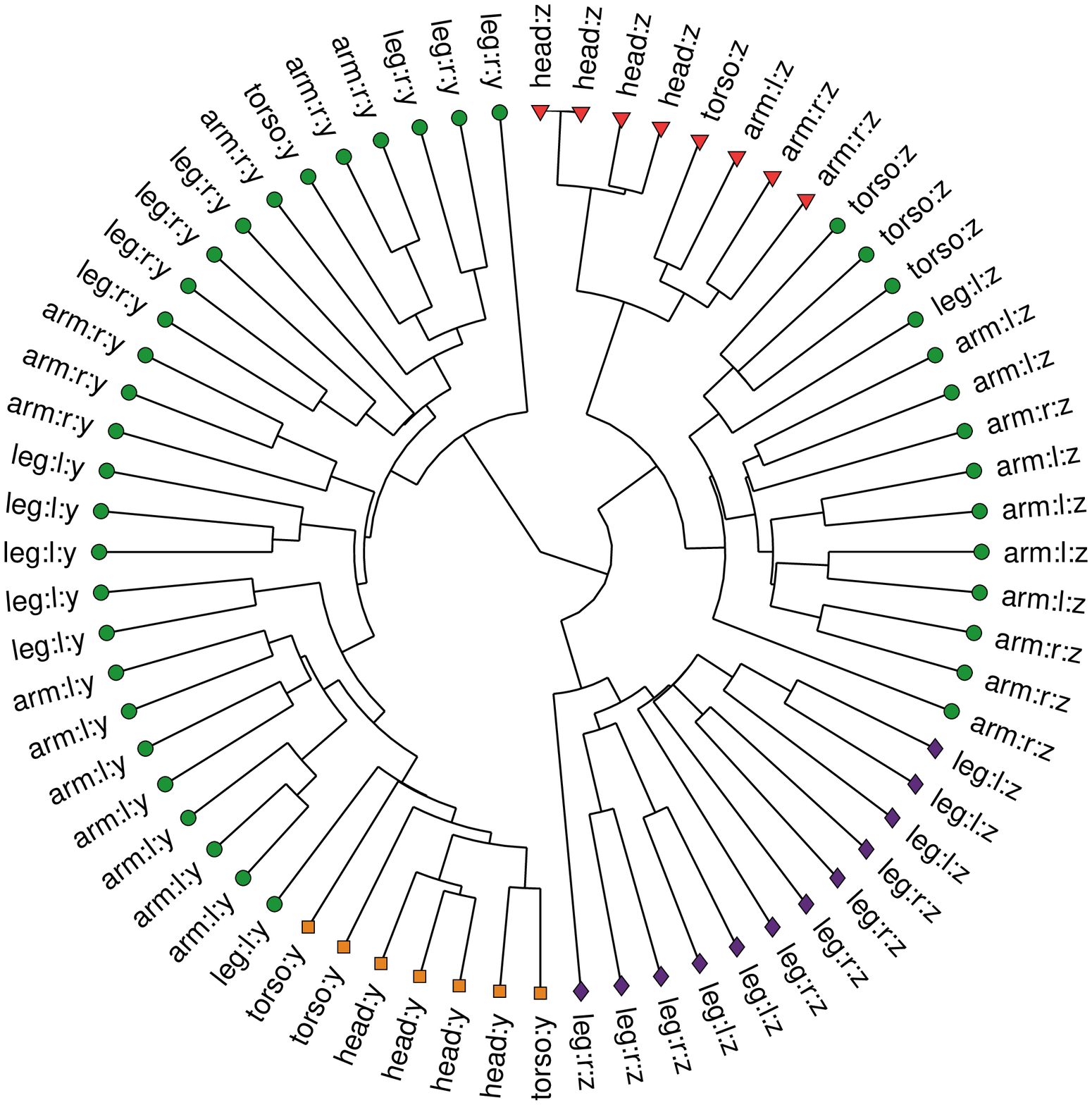} \hspace{2mm}
			\includegraphics[scale=0.40]{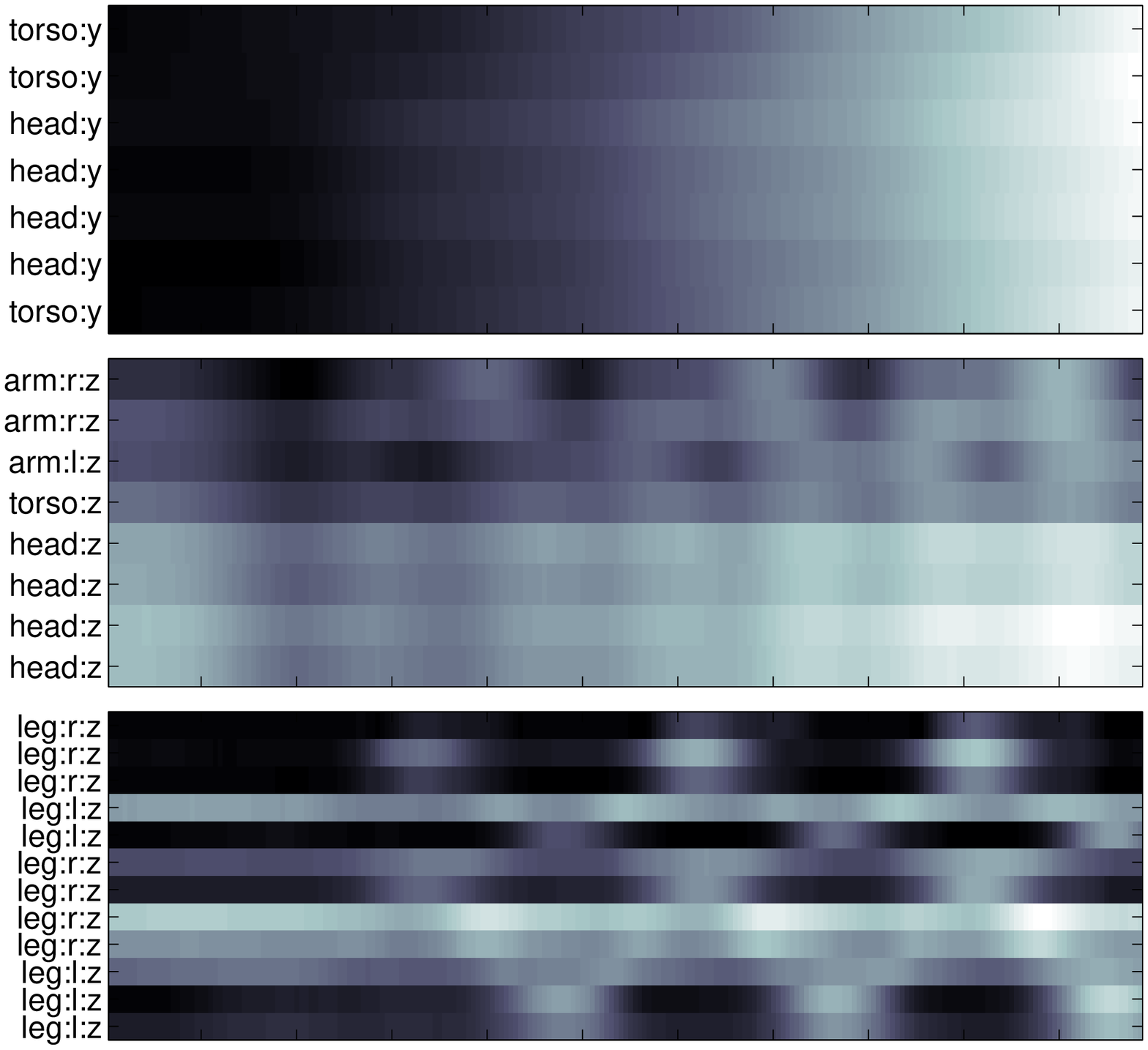}
		\end{psfrags}
	\caption{MOCAP data results. (Left) Resulting subtree from the particle with maximum weight (0.0784) at iteration 50. (Right) Data corresponding to three subtrees of (Left) marked with squares, triangles and diamonds, respectively.}
	\label{fg:mocap_tree}
\end{figure}
\section{Discussion} \label{sc:discussion}
The model for hierarchical clustering for continuous data presented in this paper has shown to perform better than the other alternatives available in the literature with appealing reduced computational cost. We also showed that the approximation {\sc MPost2} has nearly the same performance as the exact version {\sc MPost1} but with much improved computational cost and that the greedy strategy obtained as the mode of the merging time posterior behaves better than the algorithm proposed by \citet{teh08}. Experiments with real data highlight the flexibility of our model by means of exploiting covariation through the use of Gaussian process priors.

Some opportunities for future research being considered include: use the hierarchical cluster model as a prior for the factors in a factor model for better handling of time series or replicates in biological data. The model itself could see improvements if allowed for arbitrary branching structures and stage specific length scales.
\section*{Supplementary materials}
{\bf Source code:} Matlab/C code required to perform the methods described in the manuscript, including sample scripts and the artificial data generator used throughout the results section (\url{mvbtree_v0.1.zip}, compressed file).
%
%
\appendix
\section{Tree posteriors details} \label{eq:delta_pi_details}
We can write the first line of equation~\eqref{eq:delta_pi_modpost} as
\begin{align}
	& = \ p(\Delta_k|\pi_k,\t_{1:k-1},\cdot)p(\pi_k|\t_{1:k-1},\pi_{k-1},\cdot) \,, \label{eq:pdelta_ppi} \\
	& = \ \DN\left(\m\middle|\0,v_k\bPhi\right)\Exp\left(v_k\middle|\lambda/2\right)\Exp(-r_k|\lambda/2) \,, \nonumber \\
	& = \frac{\lambda}{2}(2\pi)^{-d/2}|\bPhi|^{-1/2}\underbrace{|v_k|^{-1/2}\exp\left(-\frac{1}{2}v_k\inv\bepsilon_{k-1,C}-\frac{1}{2}\lambda v_k\right)}\frac{\lambda}{2}\exp\left(\frac{\lambda}{2}r_k\right) \,, \label{eq:gig_core}
\end{align}
where $\lambda=(n-k+1)(n-k)/2$, $\m=\m_{c_1}-\m_{c_2}$ and $\bepsilon_{k-1,C}=\m\bPhi\inv\m\ts$. We recognize the segment in braces of equation~\eqref{eq:gig_core} as the core of a generalized inverse Gaussian distribution \citep{jorgensen82a} defined as
\begin{align*}
	{\rm GIG}(x|\lambda,\chi,\psi) \ = \ 	\frac{(\psi\chi\inv)^{\lambda/2}}{2K_\lambda(\sqrt{\psi\chi})}x^{\lambda-1}\exp\left(-\frac{\chi}{2}x\inv-\frac{\psi}{2}x\right) \,,
\end{align*}
where $K_\nu(z)$ is the modified Bessel function of second kind with parameter $\nu$.

We can thus rewrite equation~\eqref{eq:pdelta_ppi} as
\begin{align*}
	\Delta_k|\pi_k,\t_{1:k-1},\cdot \ \sim & \ {\rm GIG}(v_k|\widetilde{\lambda},\bepsilon_{k,C},\lambda) \,, \\
	\pi_k|\t_{1:k-1},\pi_{k-1},\cdot  \ \propto & \ \underbrace{\frac{\lambda^2}{4}(2\pi)^{-d/2}\frac{2K_{\widetilde{\lambda}}(\sqrt{\lambda\bepsilon_{k-1,C}})}{(\lambda\bepsilon_{k-1,C}\inv)^{\widetilde{\lambda}/2}}|\bPhi|^{-1/2}\exp(\frac{\lambda}{2}r_k)}_{Z_{k,C}} \,,
\end{align*}
where $\widetilde{\lambda}=1-d/2$. 
%
%
\bibliography{./mlbib}
\end{document}